\documentclass[sigconf, nonacm]{acmart}
\AtBeginDocument{%
  }

\setcopyright{acmlicensed}
\copyrightyear{2025}
\acmYear{2025}
\acmDOI{XXXXXXX.XXXXXXX}
\acmConference[Conference acronym 'XX]{Make sure to enter the correct
  conference title from your rights confirmation email}{June 03--05,
  2018}{Woodstock, NY}
\acmISBN{978-1-4503-XXXX-X/2018/06}

\usepackage{mathtools}
\mathtoolsset{showonlyrefs=true}

\usepackage{booktabs, array}
\newcolumntype{P}[1]{>{\centering\arraybackslash}p{#1}}
\newcolumntype{M}[1]{>{\centering\arraybackslash}m{#1}}
\newcolumntype{L}[1]{>{\raggedright\arraybackslash}m{#1}}

\usepackage{multirow}

\usepackage{subcaption}

\usepackage[ruled,vlined,linesnumbered]{algorithm2e}

\usepackage{tcolorbox}

\newtcolorbox[auto counter]{mybox}[3][]{colframe = #2!25,
  colback  = #2!10,
  coltitle = #2!20!black, float, title={Box~\thetcbcounter: #3},#1}

\usepackage{graphicx}
\usepackage{tikz}
\usetikzlibrary{arrows.meta,positioning,fit,calc,backgrounds,shapes.geometric,decorations.pathreplacing}

\definecolor{train}{HTML}{E8F5E9}     
\definecolor{serve}{HTML}{E3F2FD}     
\definecolor{analysis}{HTML}{FFF3E0}  

\begin{document}

\title[Experimentation Accelerator]{Experimentation Accelerator: Interpretable Insights and Creative Recommendations for A/B Testing with Content-Aware ranking}
 
\author{Zhengmian Hu}
\authornote{Both authors contributed equally to this research.}
\affiliation{%
  \institution{Adobe Research}
  \city{San Jose}
  \state{CA}
  \country{USA}
}
\email{zhengmianh@adobe.com}

\author{Lei Shi}
\authornotemark[1]
\affiliation{%
  \institution{Adobe Research}
  \city{San Jose}
  \state{CA}
  \country{USA}
}
\email{leis@adobe.com}

\author{Ritwik Sinha}
\affiliation{%
  \institution{Adobe Research}
  \city{San Jose}
  \state{CA}
  \country{USA}
}
\email{risinha@adobe.com}

\author{Justin Grover}
\affiliation{%
  \institution{Adobe Digital Experience}
  \city{San Jose}
  \state{CA}
  \country{USA}
}
\email{jgrover@adobe.com}

\author{David Arbour}
\affiliation{%
  \institution{Adobe Research}
  \city{San Jose}
  \state{CA}
  \country{USA}
}
\email{arbour@adobe.com}







\renewcommand{\shortauthors}{Trovato et al.}

\begin{abstract}
  Modern online experimentation faces two bottlenecks: scarce traffic forces tough choices on which variants to test, and post-hoc insight extraction is manual, inconsistent, and often content-agnostic. Meanwhile, organizations underuse historical A/B results and rich content embeddings that could guide prioritization and creative iteration. We present a unified framework to (i) prioritize which variants to test, (ii) explain why winners win, and (iii) surface targeted opportunities for new, higher-potential variants. Leveraging treatment embeddings and historical outcomes, we train a CTR ranking model with fixed effects for contextual shifts that scores candidates while balancing value and content diversity. For better interpretability and understanding, we project treatments onto curated semantic marketing attributes and re-express the ranker in this space via a sign-consistent, sparse constrained Lasso, yielding per-attribute coefficients and signed contributions for visual explanations, top-k drivers, and natural-language insights. We then compute an opportunity index combining attribute importance (from the ranker) with under-expression in the current experiment to flag missing, high-impact attributes. Finally, LLMs translate ranked opportunities into concrete creative suggestions and estimate both learning and conversion potential, enabling faster, more informative, and more efficient test cycles. These components have been built into a real Adobe product, called \textit{Experimentation Accelerator}, to provide AI-based insights and opportunities to scale experimentation for customers. We provide an evaluation of the performance of the proposed framework on some real-world experiments by Adobe business customers that validate the high quality of the generation pipeline. 
\end{abstract}

\begin{CCSXML}
<ccs2012>
   <concept>
       <concept_id>10010147.10010178.10010179.10010182</concept_id>
       <concept_desc>Computing methodologies~Natural language generation</concept_desc>
       <concept_significance>500</concept_significance>
       </concept>
   <concept>
       <concept_id>10003120.10003121.10003122.10003334</concept_id>
       <concept_desc>Human-centered computing~User studies</concept_desc>
       <concept_significance>500</concept_significance>
       </concept>
   <concept>
       <concept_id>10003752.10010070.10010071.10010085</concept_id>
       <concept_desc>Theory of computation~Structured prediction</concept_desc>
       <concept_significance>500</concept_significance>
       </concept>
   <concept>
       <concept_id>10010147.10010257.10010293.10010309.10010312</concept_id>
       <concept_desc>Computing methodologies~Principal component analysis</concept_desc>
       <concept_significance>500</concept_significance>
       </concept>
   <concept>
       <concept_id>10002951.10003317.10003338.10003343</concept_id>
       <concept_desc>Information systems~Learning to rank</concept_desc>
       <concept_significance>500</concept_significance>
       </concept>
   <concept>
       <concept_id>10002951.10003317.10003347.10003353</concept_id>
       <concept_desc>Information systems~Sentiment analysis</concept_desc>
       <concept_significance>500</concept_significance>
       </concept>
 </ccs2012>
\end{CCSXML}

\ccsdesc[500]{Computing methodologies~Natural language generation}
\ccsdesc[500]{Human-centered computing~User studies}
\ccsdesc[500]{Theory of computation~Structured prediction}
\ccsdesc[500]{Computing methodologies~Principal component analysis}
\ccsdesc[500]{Information systems~Learning to rank}
\ccsdesc[500]{Information systems~Sentiment analysis}

\keywords{Transfer learning; GenAI-powered experimentation; Learning to rank; Interpretability}


\maketitle

\section{Introduction}
Online randomized experimentation underpins modern digital marketing: it evaluates product changes, personalizes experiences at scale, and turns customer interactions into measurable uplift in engagement and revenue. It provides unbiased estimation and uncertainty quantification of the effect of the change, facilitating causal understanding, solid business insight, and better decision-making. 

Nowadays, however, as experiments grow to larger scales, the mechanics that make experimentation reliable begin to encounter practical limits. In this paper, we investigate two persistent constraints that become more acute at scale. First, the number of variants to be tested is increasing, especially boosted by the success of Generative AI. Each additional variant dilutes statistical power, raises the minimum detectable effect, and lengthens test duration. This forces hard choices about what to launch under fixed user budgets and campaign timelines, making naive ``test everything'' strategies infeasible. Second, two critical downstream tasks are typically ad hoc: \emph{insight extraction} and \emph{opportunity generation}. By \emph{insight extraction}, we mean converting raw A/B outcomes and variant content into business insights—e.g., identifying which semantic attributes (tone, incentive, urgency) explain observed performance gaps. By \emph{opportunity generation}, we mean systematically proposing improvement in potential variants and turning insight into actionable creative changes. When done manually, both tasks are costly across many concurrent tests and prone to analyst-specific bias.

Harnessing recent advances in generative AI and representation learning, we introduce a unified framework that exploits two high-value assets to address the above challenges: archives of historical A/B results (with effect sizes and uncertainties) and rich content embeddings that capture the semantic structure of creatives. Concretely, our approach (i) prioritizes which variants to test, (ii) explains why certain variants win, and (iii) reveals targeted opportunities for new, higher-potential variants. The framework proceeds in three integrated stages. \textbf{Ranking.} We train a click-through rate (CTR) ranking model on the embeddings of historical treatments and outcomes. At inference time, the model can predict the relative rank of candidate variants in a new experiment. \textbf{Insights.} For better interpretability and deeper understanding, we project treatments onto a curated set of semantic marketing attributes to obtain attribute scores, then re-express the ranker in this attribute space. This yields per-attribute coefficients that quantify the contribution of each attribute in driving predicted CTR gaps. It further enables data-driven natural-language insights generation. \textbf{Opportunities.} We construct an opportunity index that combines attribute importance (from the ranker) with attribute under-expression in the current experiment to detect missing, high-impact attributes; large language models (LLMs) translate these ranked opportunities into concrete creative suggestions and estimate both learning and conversion potential.

Our contributions are threefold:
\begin{enumerate}
    \item A \emph{ranking model} that transfers the knowledge from historical experiment through embeddings, that can predict the relative rank of candidate variants. This step leverages information from high-quality A/B testing datasets and builds a foundation with solid statistical rigor for further downstream tasks. 
    
    \item An \emph{insight extraction module} that projects the ranker model weights into a semantic attribute space via constrained Lasso, producing signed per-attribute contributions and standardized insight artifacts (visuals, top-$k$ drivers, natural-language rationales).
    
    \item An \emph{opportunity generation module} that is built on a principled opportunity index that surfaces missing, high-impact attributes and turns them into LLM-generated creative improvement suggestions.
\end{enumerate}

These components are implemented in a production system, \textit{Experimentation Accelerator}, now live with Adobe customers\footnote{See the official website \url{https://business.adobe.com/products/journey-optimizer/experimentation-accelerator.html}}. We evaluate the framework on real-world experiments, showing that the pipeline can provide satisfactory and faithful explanations tied to crucial marketing attributes, and actionable opportunities that accelerate more informative and efficient test cycles.

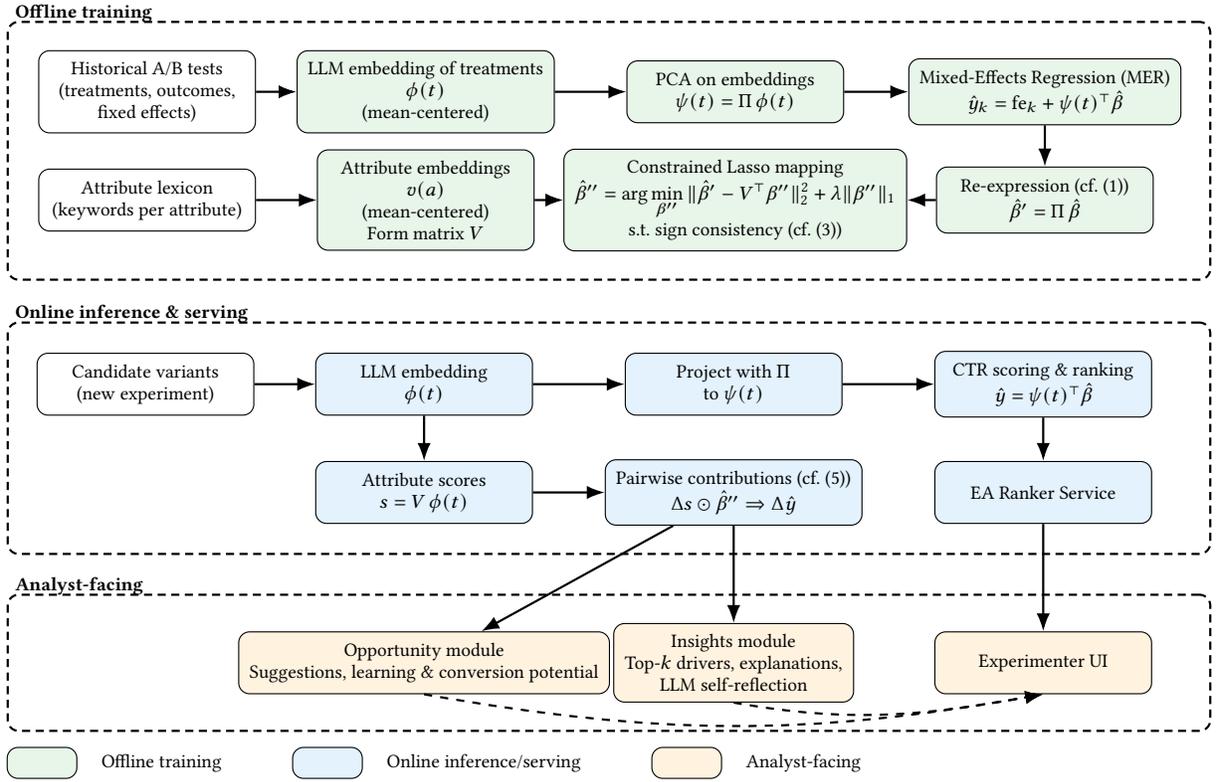
\begin{figure*}[t]
\centering
\resizebox{0.9\textwidth}{!}{%
\begin{tikzpicture}[
  node distance = 9mm and 13mm,
  >=Latex,
  every node/.style = {font=\footnotesize},
  group/.style     = {draw, rounded corners, thick, inner sep=10pt, densely dashed},
  swatch/.style    = {draw, minimum width=9mm, minimum height=4mm},
  datablock/.style = {draw, rounded corners, align=center, inner sep=4pt, minimum height=8mm, minimum width=28mm, fill=white},
  proc/.style      = {datablock, fill=train},
  serv/.style      = {datablock, fill=serve},
  ana/.style       = {datablock, fill=analysis},
  arr/.style       = {->, thick},
  art/.style       = {->, thick, dashed}
]

\newlength{\laneFrameSep}   
\setlength{\laneFrameSep}{10mm}
\newlength{\groupInnerSep}  
\setlength{\groupInnerSep}{10pt}
\newlength{\laneContentSep} 
\setlength{\laneContentSep}{\laneFrameSep}
\addtolength{\laneContentSep}{2\groupInnerSep}

\coordinate (offTL) at (10mm,-8mm);
\coordinate (c11) at (offTL);
\coordinate (c12) at ($(offTL)+(40mm,0)$);
\coordinate (c13) at ($(offTL)+(80mm,0)$);
\coordinate (c14) at ($(offTL)+(120mm,0)$);
\coordinate (c21) at ($(offTL)+(0,-14mm)$);
\coordinate (c22) at ($(offTL)+(40mm,-14mm)$);
\coordinate (c23) at ($(offTL)+(80mm,-14mm)$);
\coordinate (c24) at ($(offTL)+(120mm,-14mm)$);

\node[datablock, anchor=center] (hist2)      at ($(c11)+(4mm,0)$) {Historical A/B tests\\(treatments, outcomes,\\ fixed effects)};
\node[proc,      anchor=center] (embTtrain2) at (c12) {LLM embedding of treatments\\$\phi(t)$ \\ (mean-centered)};
\node[proc,      anchor=center] (pca2)       at (c13) {PCA on embeddings\\$\psi(t)=\Pi\,\phi(t)$};
\node[proc,      anchor=center] (mer2)       at (c14) {Mixed-Effects Regression (MER)\\$\hat{y}_k=\mathrm{fe}_k+\psi(t)^\top\hat{\beta}$};
\node[datablock, anchor=center] (lexicon2)   at ($(c21)+(4mm,0)$) {Attribute lexicon\\(keywords per attribute)};
\node[proc,      anchor=center] (attremb2)   at (c22) {Attribute embeddings\\$v(a)$\\(mean-centered)\\Form matrix $V$};
\node[proc,      anchor=center] (lasso2)     at (c23) {Constrained Lasso mapping\\$\displaystyle \hat{\beta}''=\arg\min_{\beta''}\|\hat{\beta}'-V^\top\beta''\|_2^2+\lambda\|\beta''\|_1$\\s.t. sign consistency (cf. \eqref{eq:ranker_model_decomp})};
\node[proc,      anchor=center] (betaprime2) at (c24) {Re-expression (cf. \eqref{eq:ranker_model})\\$\hat{\beta}'=\Pi\,\hat{\beta}$};

\node[group, draw=none, inner sep=0,
      fit=(hist2)(embTtrain2)(pca2)(mer2)(lexicon2)(attremb2)(lasso2)(betaprime2)] (gtrain2box) {};

\coordinate (serveTL) at ([xshift=10mm,yshift=-\laneContentSep]gtrain2box.south west);
\coordinate (s11) at (serveTL);
\coordinate (s12) at ($(serveTL)+(40mm,0)$);
\coordinate (s13) at ($(serveTL)+(80mm,0)$);
\coordinate (s14) at ($(serveTL)+(120mm,0)$);
\coordinate (s21) at ($(serveTL)+(0,-14mm)$);
\coordinate (s22) at ($(serveTL)+(40mm,-14mm)$);
\coordinate (s23) at ($(serveTL)+(80mm,-14mm)$);
\coordinate (s24) at ($(serveTL)+(120mm,-14mm)$);

\node[datablock, anchor=center] (cands2)    at ($(s11)+(4mm,0)$) {Candidate variants\\(new experiment)};
\node[serv,      anchor=center] (embTinfer2) at (s12) {LLM embedding\\$\phi(t)$};
\node[serv,      anchor=center] (proj2)      at (s13) {Project with $\Pi$\\to $\psi(t)$};
\node[serv,      anchor=center] (predict2)   at (s14) {CTR scoring \& ranking\\$\hat{y}=\psi(t)^\top\hat{\beta}$};

\node[serv,      anchor=center] (scores2)    at (s22) {Attribute scores\\$s=V\,\phi(t)$};
\node[serv,      anchor=center] (contrib2)   at (s23) {Pairwise contributions (cf. \eqref{eqn:pairwise_contribution})\\$\Delta s \odot \hat{\beta}'' \Rightarrow \Delta \hat{y}$};
\node[serv,      anchor=center] (service2)   at (s24) {EA Ranker Service};

\node[group, draw=none, inner sep=0,
      fit=(cands2)(embTinfer2)(proj2)(predict2)(scores2)(contrib2)(service2)] (gserve2box) {};

\coordinate (anaTL) at ($(serveTL)+(0,-19mm-\laneContentSep)$);
\coordinate (a11) at ($(anaTL)+(40mm,0)$);
\coordinate (a12) at ($(anaTL)+(80mm,0)$);
\coordinate (a13) at ($(anaTL)+(120mm,0)$);

\node[ana, anchor=center] (ui2)       at (a13) {Experimenter UI};
\node[ana, anchor=center] (insights2) at (a12) {Insights module\\Top-$k$ drivers, explanations,\\LLM self-reflection};
\node[ana, anchor=center] (opps2)     at (a11) {Opportunity module\\Suggestions, learning \& conversion potential};

\begin{scope}[on background layer]
  \node[group, draw=none, inner sep=0,
        fit=(ui2)(insights2)(opps2)] (ganalytics2box) {};

  \node[inner sep=0, draw=none, fit=(gtrain2box)(gserve2box)(ganalytics2box)] (g2bounds) {};

  \node[group, label={[anchor=west,yshift=1mm]north west:\textbf{Offline training}},
        fit=(g2bounds.west |- gtrain2box.north) (g2bounds.east |- gtrain2box.south)] (gtrain2aligned) {};
  \node[group, label={[anchor=west,yshift=1mm]north west:\textbf{Online inference \& serving}},
        fit=(g2bounds.west |- gserve2box.north) (g2bounds.east |- gserve2box.south)] (gserve2aligned) {};
  \node[group, label={[anchor=west,yshift=1mm]north west:\textbf{Analyst-facing}},
        fit=(g2bounds.west |- ganalytics2box.north) (g2bounds.east |- ganalytics2box.south)] (ganalytics2aligned) {};
\end{scope}

\draw[arr] (hist2) -- (embTtrain2);
\draw[arr] (embTtrain2) -- (pca2);
\draw[arr] (pca2) -- (mer2);
\draw[arr] (mer2) -- (betaprime2);
\draw[arr] (lexicon2) -- (attremb2);
\draw[arr] (betaprime2) -- (lasso2);
\draw[arr] (attremb2) -- (lasso2);

\draw[arr] (cands2) -- (embTinfer2);
\draw[arr] (embTinfer2) -- (proj2);
\draw[arr] (proj2) -- (predict2);
\draw[arr] (predict2) -- (service2);
\draw[arr] (embTinfer2) -- (scores2);
\draw[arr] (scores2) -- (contrib2);


\draw[arr] (service2.south) -- (ui2.north);
\draw[arr] (contrib2) -- (insights2);
\draw[arr] (contrib2) -- (opps2);

\draw[art] (insights2.south) to[out=-10, in=-170] (ui2.south);
\draw[art] (opps2.south) to[out=-10, in=-170] (ui2.south);

\node[proc, minimum width=9mm, minimum height=4mm, below=4mm of ganalytics2aligned.south west, anchor=west] (legA) {};
\node[anchor=west] (legAlabel) at ([xshift=2mm]legA.east) {Offline training};

\node[serv, minimum width=9mm, minimum height=4mm, right=8mm of legAlabel] (legB) {};
\node[anchor=west] (legBlabel) at ([xshift=2mm]legB.east) {Online inference/serving};

\node[ana, minimum width=9mm, minimum height=4mm, right=8mm of legBlabel] (legC) {};
\node[anchor=west] (legClabel) at ([xshift=2mm]legC.east) {Analyst-facing};

\end{tikzpicture}%
}
\caption{Overall architecture of the Experimentation Accelerator: (green) offline training learns the PCA projection $\Pi$, MER weights $\hat{\beta}$, and attribute-space coefficients $\hat{\beta}''$; (blue) online inference reuses $(\Pi,\hat{\beta},V,\hat{\beta}'')$ to score candidates, attribute-score them, and compute pairwise contributions; (orange) analyst-facing modules turn quantitative attributions into insights and opportunity suggestions. See \eqref{eq:ranker_model}-\eqref{eq:ranker_model_decomp} for the model re-expression.}
\label{fig:simple-scaffold}
\end{figure*}

\section{Related work}

\paragraph{Content-aware ranking.} Related work includes the following \cite{countbayesie2023gptabtest,ellickson2023treatmentembeddings}, which explores the idea of using treatment embeddings to learn a prediction model for the outcome or rank of several variants. A key assumption in these works is that there is a large dataset of experiments that can be readily used to predict the relative ranking of treatments, and evaluation is often on random hold-out samples. In reality, many brands operate in data-sparse settings and require learning from completed experiments in a different domain, raising the question of whether such learnings will carry over.

\paragraph{CTR prediction and reaction modeling.} CTR prediction for ads focuses primarily on predictive accuracy \cite{liu2023prior, ye2024lolallmassistedonlinelearning, acm3219819, shi2025leveraging}. Reaction modeling for social media posts also aims to learn the community's reaction from past posts and provide predictive models before publication \cite{acm2512875}. In contrast, our method not only predicts the ranking of treatments but also provides interpretable insights and opportunities through marketing attributes and ranking model coefficients. This further helps with test prioritization and creative improvement in future experiments. 

\paragraph{LLM as a judge.} Prompting an LLM or using LLM-as-a-judge to perform text generation evaluation has also been explored, e.g., \cite{arxiv2302.04166}. This approach does not associate treatments with outcome data in real-world A/B testing datasets. Our method instead derives insights from quantitative models that are trained for our specific experimentation tasks.

\paragraph{Content optimization and creative generation.} Related works also include content optimization and creative generation \cite{lee2024calm,columbia2020creatives,letham2019noisybo}. \cite{lee2024calm} performs language-model fine-tuning for content optimization. \cite{columbia2020creatives} develops a ranking model as well as a deep-learning-based generation model. \cite{letham2019noisybo} applies Bayesian optimization to maximize treatment rewards. However, these works typically operate directly in the embedding space rather than at the level of marketing attributes, making it less explainable which attributes are most important from historical experiments and which attributes need to be improved. Our approach achieves this goal in a more interpretable manner.

\section{Semantic ranking model}
\label{sec:semantic-ranking}
In this section, we introduce the technical details of the core pipeline of the backbone ranking model for our experimentation accelerator. The high-level question is, given a set of variants in an experiment, can we predict their final ranking in terms of their prospective outcomes? This requires building a model based on some historical experimental data that can explore the semantic information underneath the treatment arms, and transfer that knowledge to the new experiments. In our approach, we focus on ranking the treatment arms based on CTR:
\begin{align}
    \text{CTR} = \frac{\text{Number of Clicks}}{\text{Number of Impressions}} \times 100\%. 
\end{align}

\subsection{Choice of training and testing data}
The first question to consider is what type of training data to use for building such a ranking model, and what data to use to evaluate the model. There are multiple concerns:

First, we need datasets in digital marketing scenarios, instead of a general text dataset for other purposes. It is very crucial to ensure the dataset satisfies good alignment with our business purpose is very crucial. Different application scenarios may collect very different text materials, and even the same text material can be interpreted differently in different settings. This rules out many open benchmark datasets for general text analysis applications, such as classification or semantic clustering. 
   
Second, we need to use A/B testing datasets that align well with the modality of our experiments with a similar target outcome. The importance of A/B testing data is highlighted in its solid randomization, which guarantees that there are no unmeasured confounders in outcome prediction. Working with such datasets frees the model from hidden bias that can be induced by factors that are orthogonal to the semantic information of the texts. Moreover, targeting on the similar outcome guarantees that the knowledge can be transferred and interpreted plausibly. 

Third, we need to utilize public datasets for training and our direct customer data for testing. Our goal here is to build a platform that provides business-to-business (B2B) service to other companies for better solutions to conduct A/B testing. Although we can collect many customers' data, these cannot be used as training data because of the legal restrictions on customer data usage. Nevertheless, these datasets will serve as perfect testing sources as the evaluations will directly target our customers. This will provide more solid evidence than other approaches, such as train/test splitting-based evaluation, and validate the quality of transfer learning. 

With these concerns, we use the Upworthy dataset \citep{matias2021upworthy} as our training data, which is an open dataset of thousands of A/B tests of headlines conducted by Upworthy from January 2013 to April 2015. It contains the CTR for the headlines, which aligns perfectly with our target outcome. For testing data, we include $65$ experiments which are collected from our customers.

\subsection{Ranker model construction} 
In this section, we present the full ranking model construction pipeline. 
Let $K_{\text{train}}$ and $K_{\text{test}}$ denote the number of experiments in the training data and testing data, respectively, and $t_{k,i}$ denote the $i$-th treatment in $k$-th experiment. The pipeline is summarized in Algorithm \ref{alg:ranking}.

\begin{algorithm}[ht]
\small 
\caption{Text-Embedding-Based Mixed-Effects Regression for CTR Prediction}
\label{alg:ranking}
\KwIn{Marketing copies $\{t_{k,i}\}$ across experiments $k$; embedding model $\phi(\cdot)$; target dimension $q$}
\KwOut{Trained coefficients $\hat{\beta}$; predicted CTRs $\hat{y}_i$}

\BlankLine
\textbf{Step 1: Embedding Generation.} Generate text embeddings for each marketing copy:
\[
\phi(t_{k,i}) \leftarrow \text{Embedding}(t_{k,i}).
\]

\textbf{Step 2: Dimensionality Reduction.} Apply PCA to obtain $q$-dimensional treatment representations:
\[
\psi(t_{k,i}) = \Pi \phi(t_{k,i}), \quad \text{where } \Pi \text{ is the PCA projection matrix.}
\]

\textbf{Step 3: Model Training.} Run regression of CTR on the treatment representations with an experiment-level fixed effect term:
\[
\hat{y}_{k,i} = \text{fe}_k + \psi(t_{k,i})^\top \hat{\beta} + \epsilon_{k,i},
\]
where $\text{fe}_k$ are experiment fixed effects.

\textbf{Step 4: Prediction.} For a new experiment, predict relative CTRs:
\[
\hat{y}_i = \psi(t_i)^\top \hat{\beta} + \epsilon_i.
\]

\end{algorithm}

Now we add some discussion regarding Algorithm \ref{alg:ranking}. 

For Step 1, one key choice to make is the embedding method to use. Treatment embeddings can be retrieved from many resources. To name a few, we consider contextual transformers (such as BERT, RoBERTa, Llama), sentence transformers (such as SBERT, MiniLM, OpenAI text embedders). We can even further consider task-finetuned models. In the evaluation part, we compare the performance of the ranking model based on different choices of embeddings to finalize the most suitable choice for our setting. Moreover, we remove mean components when computing treatment embeddings. For example, suppose we are using LLM-based embeddings. We collect a list of common marketing materials $c_i$ with a total number of $N_c$ samples. The final treatment embedding is computed as:
\begin{align}
    \phi(t_i) =\operatorname{Embedding}(t_i)
    - \frac{1}{N_c} \sum_{j=1}^{N_c} \operatorname{Embedding}(c_j)
\end{align}

For Step 2, the performance of PCA involves constructing the projection matrix $\Pi$. Technically, $\Pi$ can be constructed from many sources: training data only, testing data only, and training-testing data altogether. In our case, we found that training on testing data only actually generates the most successful knowledge transfer. 

For Step 3, we essentially applied the principal component regression. Importantly, we added fixed effects to adjust for the experimental level differences for the training dataset. The idea is that some experiments generally have higher CTR than others, so the linear model needs to capture that global difference. 

For Step 4, when we do prediction, we can no longer add the fixed effect term, because for a new experiment, we wouldn't know a priori the average level. Nevertheless, we care more about the relative ranking of the treatment arms instead of the absolute value of the CTR. The prediction here should encode such order information as $\hat{y}_i$ is monotonic in predicted CTRs, yet one should never interpret them as absolute scales of CTR prediction.

     


\section{Semantic attribute characterization} 
In Section \ref{sec:semantic-ranking}, we constructed the ranking model. One important yet untouched question is: how do we interpret the weights of this model? We address this question in this section by connecting the learned model weights with a set of interpretable attributes. 

\subsection{Attribute set and embeddings}
To facilitate a more principled interpretation pipeline, we collect a set of $122$ marketing attributes that reflect the semantic meaning of the marketing contexts. These attributes are finalized through discussion with product managers, marketing executives, and customers. 
Table \ref{tab:attribute-exp} lists a few example attributes. In production, this set of attributes can be dynamically updated by customers based on their own business needs.

\begin{table*}[ht]
    \centering
    \small 
    \caption{Example marketing attributes with description, an example copy, and some representative phrases}
    \label{tab:attribute-exp}
    \begin{tabular}{P{3cm}L{5cm}L{4cm}P{4cm}}
    \toprule 
        \textbf{Attribute Name} & \textbf{Description} & \textbf{Example Copy} & \textbf{Representative Phrases}\\
    \midrule 
        \texttt{action\_oriented} & The \texttt{action\_oriented} motivates customers to act immediately by using direct, time-sensitive language designed to spark engagement and conversions.  & ``Act swiftly—click here to get started today!''  & start now, take action, get started today, \dots  \\ \midrule 
        \texttt{fear\_of\_missing\_out} (\texttt{FOMO}) & The ``\texttt{FOMO}'' attribute uses urgency, exclusivity, and viral appeal to compel customers to
        act immediately and feel part of something popular or limited. & ``Be part of the
        hottest trend everyone’s talking about today!''  & miss, trend, join, popular, \dots
        \\ \midrule 
        \dots & \dots & \dots  & \dots 
        \\ \bottomrule
    \end{tabular}
\end{table*}


  
  
  
    

To enable computation analysis with attributes, we construct embeddings for the attributes and encode the attribute set into a numerical dictionary. Consider $m$ attributes $\mathcal{A} = \{a_1,\dots, a_m\}$. For each attribute, we have a list of representative phrases (see examples in Table \ref{tab:attribute-exp}) $\{u_{m,i}\}_{i = 1}^{N_{a_m}}$ with a total number of $N_{a_m}$ phrases for attribute $a_m$. The attribute embedding is computed as:
\begin{align}
    v(a_m) =& \frac{1}{N_{a_m}}\sum_{i=1}^{N_{a_m}}\operatorname{Embedding}(u_{m,i}) \\
    &- \frac{1}{\sum_{m=1}^M N_{a_m}}\sum_{m=1}^M\sum_{i=1}^{N_{a_m}}\operatorname{Embedding}(u_{m,i}).
\end{align}
Importantly, here we applied a similar demeaning approach as in the treatment embedding generation step in Section \ref{sec:semantic-ranking}. The final collection of embeddings is saved as a matrix $V$, which we call the attribute embedding dictionary. 

\subsection{Attribute score}
Attribute scores represent the level of marketing attributes inside a treatment. The attribute scores $s_{k, i, a}$ for treatment $t_{k, i}$ in terms of attribute $a$ is given as follows:
\begin{align}
    s_{k, i, a} = \phi(t_{k, i})^\top v(a), \quad a \in \mathcal{A}. 
\end{align}
where $v(a)$ is the $p$-dimensional vector as the embedding of attribute $a$. Written in a more compact form:
\begin{align}
    s_{k, i} = V \phi(t_{k, i}),
\end{align}
where $s_{k, i}$ is the vector of attribute scores for treatment $t_{k, i}$. 

\begin{figure}
    \centering
    \includegraphics[width=0.8\linewidth]{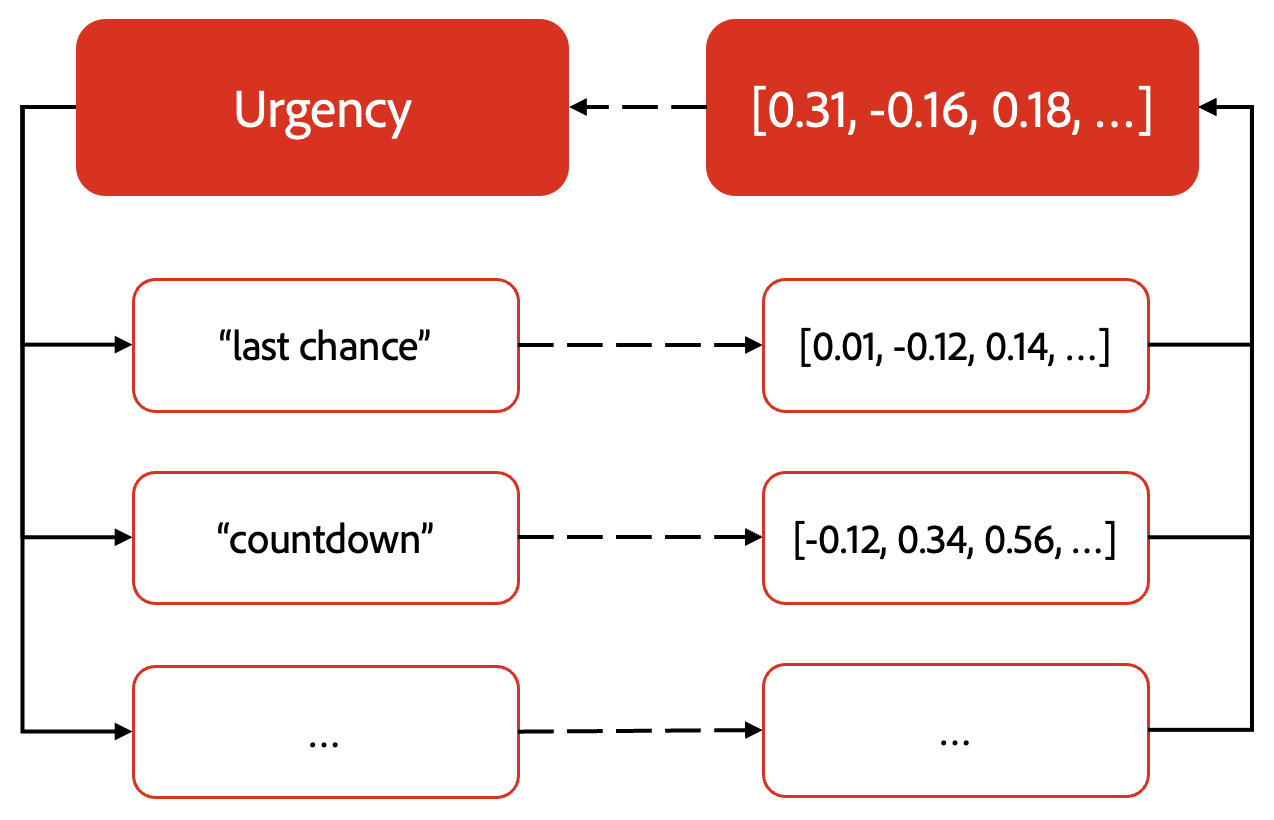}
    \caption{Attribute characterization}
    \label{fig:attribute}
\end{figure}

\subsection{Impact coefficient}
The attribute score can be combined with the ranking model weights to quantitatively measure the importance of each attribute. We call this index the \textit{impact coefficient}. Below in Algorithm \ref{alg:impact-coef}, we summarize the pipeline of computing this coefficient.
\begin{algorithm}[ht]
\small 
\caption{Attribute-Space Decomposition and Regularized Ranker Estimation}
\label{alg:impact-coef}
\SetAlgoLined
\KwIn{PCA projection matrix $\Pi$, attribute embedding dictionary $V$, learned PCA-ranker coefficients $\hat{\beta}$, embeddings $\phi(t_i)$}
\KwOut{Attribute-space coefficients $\hat{\beta}''$}

\BlankLine
\textbf{Step 1: Reparameterize PCA-Ranker Model.} \\
Compute the equivalent coefficient in the original embedding space by reversing the PCA step:
\begin{align}
\hat{\beta}' \leftarrow \Pi^\top  \hat{\beta}, \qquad 
\hat{y}_i = \phi(t_i)^\top \hat{\beta}'. \label{eq:ranker_model}
\end{align}

\textbf{Step 2: Decompose Embeddings into Attribute Embeddings plus Nuisance.} \\
Project each $\phi(t_i)$ onto the attribute space and its orthogonal complement:
\begin{align}
\phi(t_i) = & V^\top (VV^\top)^{-1} V \phi(t_i) + (I - V^\top (VV^\top)^{-1} V)\phi(t_i) \\ 
           = & V^\top (VV^\top)^{-1}s_i + \tilde{\phi}_i.
\end{align}
Hereafter we use $V^+ = V^\top (VV^\top)^{-1}$ to denote the pseudo inverse of $V$.

\textbf{Step 3: Combine Representations.} \\
Substitute decomposition into the ranker model:
\begin{align}\label{eqn:pred}
\hat{y}_i = s_i^\top V^{+\top}\hat{\beta}' + \tilde{\phi}_i^\top \hat{\beta}'
           \triangleq s_i^\top \hat{\beta}'' + \tilde{\phi}_i^\top \hat{\beta}',
\end{align}
with $\hat{\beta}'' = V^{+\top}\hat{\beta}'$.

\textbf{Step 4: Regularized Attribute Coefficient Estimation.} \\
Estimate $\hat{\beta}''$ via a sign-constrained Lasso problem:
\begin{align}
\hat{\beta}'' = & \arg\min_{\beta''}
    \|\hat{\beta}' - V^\top \beta''\|_2^2 + \lambda\|\beta''\|_1, \label{eq:ranker_model_decomp}
    \\
    & \text{s.t. } \beta'' \cdot \text{sign}(V^\top\hat{\beta}') \ge 0,
\end{align}
where $\lambda$ controls sparsity and is selected by $K$-fold cross-validation.
\end{algorithm}

We add some explanations. Step 4 is important. For the coefficient $\hat\beta''$ in definition \eqref{eqn:pred}, the following lemma suggests that it can be re-expressed as the solution of a least squares problem:
\begin{lemma}[Re-expressing $\hat\beta''$]\label{lem:re-expression}
For the impact coefficient $\hat\beta''$ in definition \eqref{eqn:pred}, we have
\begin{align}\label{eqn:beta-dprime-ls}
    \hat{\beta}'' = \arg \min_{\beta''} \|\hat{\beta}' - V^\top \beta''\|^2_2.
\end{align}    
\end{lemma}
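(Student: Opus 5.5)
The plan is to treat \eqref{eqn:beta-dprime-ls} as an ordinary least squares problem in $\beta''$ and solve it through its normal equations. We write down the unique minimizer in closed form and check that it coincides with $V^{+\top}\hat\beta'$. Throughout we use the standing assumption already implicit in Step 2 of Algorithm~\ref{alg:impact-coef}: $VV^\top$ is invertible, i.e.\ $V\in\mathbb{R}^{m\times p}$ has full row rank, so that $V^+ = V^\top (VV^\top)^{-1}$ is well defined.

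\textbf{Step 1: optimality condition.} Let $f(\beta'') = \|\hat\beta' - V^\top\beta''\|_2^2$. This is a convex quadratic with gradient
\[
\nabla f(\beta'') = -2V(\hat\beta' - V^\top\beta'').
\]
Setting the gradient to zero gives the normal equations $VV^\top \beta'' = V\hat\beta'$. The Hessian $2VV^\top$ is positive definite under the full-row-rank assumption, so $f$ is strictly convex. Hence the normal equations have exactly one solution, which is the unique global minimizer:
\[
\beta''_\star = (VV^\top)^{-1}V\hat\beta'.
\]

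\textbf{Step 2: identification.} By definition \eqref{eqn:pred}, $\hat\beta'' = V^{+\top}\hat\beta'$. Transposing the pseudo-inverse, and using that $VV^\top$ is symmetric so its inverse is too, gives
\[
V^{+\top} = \bigl(V^\top (VV^\top)^{-1}\bigr)^\top = (VV^\top)^{-1}V.
\]
Therefore $\hat\beta'' = (VV^\top)^{-1}V\hat\beta' = \beta''_\star$, which is the claim. A complementary geometric check is that $V^\top\hat\beta'' = V^\top(VV^\top)^{-1}V\hat\beta'$ is exactly the orthogonal projection of $\hat\beta'$ onto the row space of $V$, the same projector that appears in Step 2 of Algorithm~\ref{alg:impact-coef}. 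This matches the characterization of least squares as orthogonal projection.

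\textbf{Main obstacle.} The only delicate point is the rank condition. If $VV^\top$ were singular, for instance when $m > p$ or when attribute embeddings are collinear, the argmin would not be unique and $V^+$ as written would be undefined. The claim would then hold only in the sense that $V^{+\top}\hat\beta'$, with $V^+$ the Moore--Penrose inverse, is the minimum-norm minimizer. I would state the full-row-rank assumption explicitly, since it is needed to define $V^+$ in the first place, and remark on the minimum-norm interpretation otherwise. This also explains why Step 4 adds the $\ell_1$ penalty: it regularizes this possibly ill-posed least squares problem.
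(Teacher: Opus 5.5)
Your proof is correct and follows the paper's route: write the normal equations $VV^\top\beta'' = V\hat\beta'$, invert $VV^\top$, and identify $(VV^\top)^{-1}V\hat\beta'$ with $V^{+\top}\hat\beta'$. Your explicit full-row-rank assumption and the strict-convexity argument for uniqueness spell out what the paper's appeal to the first-order condition leaves implicit.
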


This motivates us to further add regularization to the least square problem to reduce collinearity and also impose a sign constraint \cite{meinshausen2013sign} for business impact consistency:
{\small 
\begin{align}
    \hat{\beta}'' = \arg \min_{\beta''} \|\hat{\beta}' - V^\top \beta''\|^2_2 + \lambda \|\beta''\|_1,~
    \text{s.t. } \beta'' \cdot \text{sign}(V^\top\hat{\beta}') \geq 0.
\end{align}
}
The constraint guarantees that $\hat{\beta}''_a$ always has the same sign as $(V \hat{\beta}')_a$ (or is exactly zero), ensuring that ``positive/negative attributes'' are interpreted in the same direction regardless of how many new attributes are added. Besides, this sign-constrained Lasso is a convex problem and can be solved via existing implementations in Python, for example, \texttt{sklearn.linear\_model.Lasso} with argument \texttt{positive = TRUE}. 

\section{Downstream application in production}    

In this section, we present several downstream applications of the above semantic ranking model, which we have already deployed in production.
These applications mainly involve building several crucial indices for measuring the quality of the experiments from different aspects.


\subsection{Insights generation}\label{sec:insights-generation}

If an experiment has finished running and reported a significant CTR difference, the marketers might want to get a more comprehensive understanding and summarization of why some treatment arms are better than others. We call such a goal ``Insights Generation'' for completed experiments. Our solution is to build an experiment-level index (which we call the insight index) that can quantitatively measure how much an attribute is contributing to the difference between treatment arms, and what is the direction of that contribution. 

To build this index, suppose two treatments $t_i,t_j$ have attribute score difference 
\begin{align}
    \Delta_{i,j} s_a = s_{i,a} - s_{j,a}, \quad a\in\mathcal{A},    
\end{align}
which measures the difference of the level of expressiveness between treatment arm  $t_i$ and $t_j$, in terms of the attribute set. To further understand how much this difference contributes to the CTR differences, we resort to the prediction step \eqref{eqn:pred} in Algorithm \ref{alg:ranking}, and express the predicted CTR difference as:
\begin{align}
    \Delta_{i,j}\hat{y} &= (\Delta_{i,j} \vec{s})^\top V^{+\top}\hat{\beta}' + (\Delta_{i,j}\tilde{\phi})^\top \hat{\beta}'
    \triangleq (\Delta_{i,j} \vec{s})^\top \hat{\beta}'' + (\Delta_{i,j}\tilde{\phi})^\top \hat{\beta}'.
\end{align}
From the above equation, we can see that, for attribute $a$, its difference in arms $t_i$ and $t_j$ will lead to a difference of $\Delta_{i,j} s_a \cdot \hat{\beta}''_a$ in the predicted CTR. Intuitively, this quantity takes the impact coefficients into account, which reweight the importance of $\Delta_{i,j} s_a$. Due to this interpretation, we define this special quantity as \textit{insight index} for attribute $a$ and write this as
\begin{align}\label{eqn:pairwise_contribution}
    R^{\textup{Ins}}_a = \Delta_{i,j} s_a \cdot \hat{\beta}''_a.
\end{align}
The sign of $R^{\textup{Ins}}_a$ now indicates the direction of the contribution of attribute $a$ to the predicted relative CTR difference $\Delta_{i,j} \hat{y}$. The magnitude of $R^{\textup{Ins}}_a$ indicates the strength of the contribution of attribute $a$.

With the insight index, we can perform downstream analyses to derive insights, such as plotting the distribution of $\Delta_{i,j} s_a \cdot \hat{\beta}''_a$ for each attribute $a$, or finding the top $k$ positive and negative attributes and hinting at their impacts on customers. For example, combining with LLM, we have the example insight extraction output in Box \ref{box:insight}. 

    
{\small 
\begin{mybox}[floatplacement=ht,label={box:insight}]{red}{Insight extraction example}
  \texttt{\textbf{Surprising Statistic helps}: Top performing treatments like "Create incredible art and make complex edits in seconds" leverage unexpected speed and capability, capturing attention and driving engagement; lower-ranked treatments lack such striking claims, resulting in less intrigue and diminished appeal.}
    
\end{mybox}
}

\subsection{Opportunity generation}
\label{sec:opportunity-generation}

We want to provide opportunities for experiments based on the ranker model. Basically, we want to compute the attribute scores for each treatment in terms of a set of marketing attributes and combine this with the ranking model weights to generate opportunities for a new experiment. Similar to insight generation, the key step here is to construct the opportunity index. The key idea is that, we search for the important attributes that are missing in the treatments being tested in a new experiment. We construct a combined index to measure both the importance and the level of expression of the attributes in one experiment.

To measure importance, we take the rank $R_a^{\text{Imp}}$ of the impact coefficients $\beta''_a$:
\begin{align}
    R_a^{\text{Imp}} = \textup{Rank}(\beta_a'').
\end{align}
A smaller $R_a^{\text{Imp}}$ means that attribute $a$ is of higher importance. To measure the level of expression of an attribute, we take the rank $R_a^{\text{Exp}}$ of the mean attribute scores:
\begin{align}
    R_a^{\text{Exp}} = \textup{Rank}\left(\max_{i\in\{1,\dots,I\}} s_{i,a} \right). 
\end{align}
A smaller $R_a^{\text{Exp}}$ indicates attribute $a$ is more explored in the variants. Now we build an opportunity index to combine these two metrics:
\begin{align}
    R^{\text{Opp}}_a = R_a^{\text{Imp}} - R_a^{\text{Exp}}.
\end{align}
A smaller $R^{\text{Opp}}_a$ means attribute $a$ has higher importance and is less represented by the current variants. This motivates a direct application: we can take several attributes that have the smallest $R^{\text{Opp}}_a$, and suggest an improvement in terms of these attributes in future iterations of the experiments. An example of opportunity generation is given by Box \ref{box:opportunity}. 
{\small 
\begin{mybox}[floatplacement=ht,label={box:opportunity}]{brown}{Opportunity generation example}
  \texttt{\textbf{Unified Brand Voice}: Ensures messaging aligns with brand identity for coherence. Example: "Your trusted partner in everyday savings." }
    
    \texttt{\textbf{Social Proof}: Builds trust by showing others' positive experiences. Example: "Over 10,000 satisfied customers can’t be wrong!"}
\end{mybox}
}
{\small 
\begin{mybox}[floatplacement=ht,label={box:opportunity}]{green}{Conversion and learning potential example}
  \textbf{Suggestion}: Increasing \texttt{Unified Brand Voice} 

  \textbf{Conversion potential}: Medium 
  
  $\star$ historical experiment suggests medium impact in rank prediction for this attribute

  \textbf{Learning potential}: High 
  
  $\star$ this attribute is not adequately explored in both current and historical experiments.
\end{mybox}
} 
\subsection{Impact and Novelty quantification}
In this section, we introduce the insight index and the opportunity index, which are synthesized measures combining attribute scores and ranking model weights. 

Based on the ranking model, we have pointed out the crucial role of the impact coefficients $\vec{\beta}''$. We propose to bin the impact coefficients, for example, into several levels (e.g., \texttt{Strong} vs. \texttt{Medium} vs. \texttt{Weak}), such that we can report an impact measure for the selected attributes in driving the conversion.

Based on the attribute score, it is useful to measure how novel an attribute is for the customer. In other words, for an attribute, we can measure how much it is expressed in the current experiments as well as in the historical experiments. If both are low, it means that this attribute has not fully explored by the current and past experiments. To quantitatively measure the novelty, let $\bar{s}_{a,\textup{history}}$ be the average attribute score for attribute $a$ across a set of historical experiments. We can define the following Novelty Index: 
\begin{align}
    R^{\text{Novel}}_a & = \textup{Rank}(\max_{i\in\{1,\dots,I\}} s_{i,a}) + \textup{Rank}(\bar{s}_{a,\textup{history}}) \\
    & = R_a^{\text{Exp}}
    +
    \textup{Rank}(\bar{s}_{a,\textup{history}}). 
\end{align}
A small novelty index indicates that attribute $a$ is less explored and potentially contains many opportunities. 

In our Experimentation Agent, we implemented the impact measure and novelty measure for the Opportunity Module, which provide informative quantitative check for the experiments.

\section{Text Generation with LLMs}
\label{sec:llm-part}

\subsection{Insights}
This component transforms the quantitative attribution in Sections~\ref{sec:semantic-ranking}-\ref{sec:insights-generation} into concise, text-grounded explanations. It has three stages: attribute selection, prompt-based explanation, and quality filtering.

\textbf{Attribute selection.} Given the impact coefficients $\hat{\beta}''$ in the attribute space (Section~\ref{sec:insights-generation}) and the per-treatment attribute scores $s_i$, we form a contrast between the empirically best and worst treatments by outcome (CTR): $\Delta s = s_{\text{best}} - s_{\text{worst}}$. We then compute per-attribute contributions $c_a = (\Delta s)_a\, \hat{\beta}''_a$. We apply a positivity filter and retain only attributes with strictly positive contribution ($c_a>0$), rank the remaining attributes by $c_a$ in descending order, and select the top-$k$. For presentation to the LLM, each attribute’s effect sign is taken from $\mathrm{sign}(\hat{\beta}''_a)$ (``positive'' if $\hat{\beta}''_a>0$, otherwise ``negative''). This follows Eq.~(\ref{eqn:pred}): signed score differences, scaled by the learned coefficient, quantify directional lift from best over worst.

\textbf{Prompting and output format.} We serialize each experiment as an ordered list of treatments (best\,$\to$\,worst by CTR) with compact text deltas, and pass the selected attributes together with their effect polarity (from $\mathrm{sign}(\hat{\beta}''_a)$). The prompt adopts established prompt-engineering techniques to elicit concise, text-grounded rationales: (i) role conditioning and explicit task constraints, (ii) few-shot exemplars to demonstrate the desired style, (iii) structured output requirements to enforce a fixed number of items and short, evidential sentences, and (iv) citation-first grounding that requires quoting phrases from the provided treatments. This design prioritizes determinism, scannability, and faithfulness to observable text over free-form priors. The template and constraints are tuned with the help of an LLM-based prompt designer using a held-out acceptance signal (Section~\ref{sec:o3-eval}).

\textbf{Post-processing and self-reflection.} We apply lightweight normalization (removing auxiliary markers, collapsing whitespace) and use a rule-based parser to produce a mapping from attribute to explanation. For quality control, a per-attribute self-reflection step prompts the model to critique each candidate against a concise rubric (faithful citations to the provided treatments, correct attribution, concise one-paragraph rationale) and to emit a schema-constrained acceptance decision, only accepted candidates are retained. This internal self-reflection filter is distinct from the external evaluation in Section~\ref{sec:o3-eval}. The self-reflection prompt uses standard prompt-engineering elements: role conditioning, explicit criteria, few-shot scoring examples, and fixed-format outputs, and its constraints are tuned with a held-out acceptance signal, so the final number of returned insights may be fewer than $k$ when low-quality candidates are rejected.

\subsection{Opportunity}
This component proposes experiment-level opportunities by identifying high-impact yet under-expressed attributes and generating concise, illustrative suggestions to guide future iterations.

\textbf{Selecting missing high-impact attributes.} Starting from the global impact coefficients $\hat{\beta}''$, we keep only strictly positive-impact attributes (strengthening the attribute increases predicted CTR). We optionally apply an absolute-impact threshold (relative to the maximum absolute impact) to drop negligible attributes. For the current experiment, we measure each attribute’s degree of exploration (e.g., maximum observed score across treatments) and form two ranks: (i) importance and (ii) expression. The opportunity score is the rank difference (importance minus expression), and attributes with high importance but low expression are selected. When needed, we fall back to the highest positive-impact attributes. This procedure instantiates the index in Section~\ref{sec:opportunity-generation}.

\textbf{Prompting and output format.} The LLM receives (i) the experiment’s treatment texts and (ii) the selected missing attributes. At a high level, the prompt uses role conditioning, a few illustrative examples, and simple structure constraints to elicit concise, well-structured suggestions grounded in the provided text.

\textbf{Parsing and enrichment.} We apply a rule-based parse of the structured output to enforce a consistent format and drop malformed blocks to maintain quality. We also enrich each opportunity with: (i) \emph{learning potential}, reflecting novelty relative to the experiment and historical data, and (ii) \emph{conversion potential}, reflecting the relative magnitude of the learned impact coefficient. Both are reported at a coarse level.

\section{Post-launch evaluation}
\label{sec:evaluation}
\subsection{Transfer from the Upworthy dataset}
In this section, we evaluate the transfer learning results based on Algorithm \ref{alg:ranking}, on a set of real $N_{\text{test}} = 65$ Adobe customer experiments. Each experiment contains two or more arms, and has been completed and confirmed to deliver a statistically significant difference among the treatment arms. In other words, the rank of the arms has been verified by data. For experiment $i$, denote the rank for variant $m$ as $R_{i,m}$, and the predicted rank based on Algorithm \ref{alg:ranking} as $\hat{R}_{i,m}$. We consider two metrics: (i) Mean Spearman's rank correlation coefficient, which is given by
\begin{align*}
    \rho = \frac{1}{N_{\text{test}}}\sum_{i=1}^{N_{\text{test}}} \rho_i, \quad \text{ where }
    \rho_i = \frac{\textup{Cov}({R}_{i,\cdot}, {\hat{R}}_{i,\cdot})}{\sqrt{\text{Var}({R}_{i,\cdot}) \text{Var}({\hat{R}}_{i,\cdot})}}.  
\end{align*}
This measures the rank prediction consistency for Algorithm \ref{alg:ranking}. 
(ii) Top-1 accuracy, which checks the portion of experiments that achieves perfect rank-1 prediction: let $m^\star_i$ and $\hat{m}_i$ be the true and estimated best variant in experiment $i$, respectively, we define 
\begin{align*}
    \text{Top-1 Accuracy} = \frac{1}{{N_{\text{test}}}}\sum_{i=1}^{{N_{\text{test}}}}1(m^\star_i = \hat{m}_i) 
\end{align*}

The final results are reported in Table \ref{tab:perf}, together with the confidence interval based on the standard deviation of the metrics across the validation experiments. The transfer learning achieves a pretty significant result, both in terms of general rank correlation and top-1 accuracy, compared with the original baseline where markets make a random guess and do not rely on the prediction model to make the decision. 
\begin{table}[ht!]
\centering
\small 
\caption{Performance comparison.}
\label{tab:perf}
\begin{tabular}{lcc}
\toprule
\textbf{Metric} & $\boldsymbol{\rho}$ (rank corr.) & \textbf{Top-1 Acc.} \\
\midrule
Transfer Learning & $0.5144 \pm 0.1658$ & $0.7021 \pm 0.1180$ \\
Random Guess Baseline                  & $0$                  & $0.4293$            \\
\bottomrule
\end{tabular}
\end{table}

\subsection{Leave-One-Out on the choice of embeddings}
In this section, we report some numerical results regarding the choice of embeddings for building the prediction model. 
We compare three embedding families: (i) MiniLM \citep{wang2020MINILM}, a compact Transformer encoder widely used for sentence-level representations; (ii) Llama embeddings \citep{grattafiori2024llama3herdmodels}, extracted from a modern LLM encoder to capture richer, long-range semantics; and (iii) Attribute Score, a LLM-as-a-judge approach to directly score the variants in terms of the attribute set we introduced, in a scale of $1$ to $5$. We report Spearman's rank correlation between the predicted ranking and the observed hold-out outcomes, averaged across leave-one-out folds, with confidence intervals based on standard deviations across metrics on the hold-out experiments. We again include a random guess baseline, which yields zero expected correlation.

Table~\ref{tab:spearman_ajo} summarizes the leave-one-out performance. Llama model achieves the highest average correlation ($0.727 \pm 0.116$), indicating stronger alignment between its semantic geometry and the outcome-relevant signal. 
\begin{table}[ht!]
\centering
\small 
\caption{Spearman rank correlations on the hold-out datasets.}
\label{tab:spearman_ajo}
\begin{tabular}{lcc}
\toprule
\textbf{Embedding} & \textbf{Hold-out Experiments} & \textbf{Random Guess} \\
\midrule
MiniLM & $0.454 \pm 0.155$ & $0$ \\
Llama  & $0.727 \pm 0.116$ & $0$ \\
Attribute Score  & $0.576 \pm 0.142$ & $0$ \\
\bottomrule
\end{tabular}
\end{table}


\subsection{Insights generation and evaluation}

In this section, we evaluate the quality of automatically generated insights about creative variants and experimental outcomes. We generate insights following the proposed framework on a set of Adobe customer experiments. Then we perform human annotation and report the acceptance rate. Besides, we also report LLM (GPT-o3) evaluation results in Appendix \ref{sec:o3-eval}. 

\paragraph{Human evaluation.}
Human evaluation results are reported in Table~\ref{tab:llm_insight_quality}. We asked experienced marketers and engineers to assess whether each surfaced insight is well-structured, accurate, and reasonable. Under this protocol, GPT-4o produced the most accepted insights overall (46 accepted out of 52 generated; 88.46\%), while LLaMA-70B showed a similarly high acceptance rate (88.46\%) but generated far fewer total insights (23 accepted out of 26). GPT-4o-mini generated substantially more candidates but with much lower acceptance (42.04\% and 33.33\%, respectively), indicating noisier outputs despite higher throughput.

\begin{table}[h]
\centering
\small
\caption{Comparison of insight generation and quality across different language models, in terms of the total numbers of: (1) experiments reviewed, (2) experiments with insights, (3) insights generated, (4) high-quality insights. }
\label{tab:llm_insight_quality}
\begin{tabular}{>{\raggedright\arraybackslash\scriptsize}p{3cm}ccc}
\toprule
 & \textbf{GPT-4o} & \textbf{GPT-4o-mini} & \textbf{LLaMA-70B}  \\
\midrule
\textbf{Experiments Reviewed}       & 65 & 65 & 65 \\
\textbf{Experiments with Insights}  & 37 & 60 & 22  \\
\textbf{Insights Generated}         & 52 & 157 & 26 \\ 
\textbf{High-Quality Insights} & 46 & 66 & 23 \\ 
\midrule 
\textbf{Acceptance Rate (\%)}                 & 88.46 & 42.04 & 88.46 \\
\bottomrule
\end{tabular}
\end{table}

\subsection{Opportunity generation evaluation}

We evaluate the quality of automatically generated opportunities produced by different language models. We report both human acceptance rates in the main content and LLM (GPT-o3) evaluation results in Appendix \ref{sec:o3-eval}. 

\paragraph{Human evaluation.}
Table~\ref{tab:llm_opportunity_quality} summarizes human annotations. GPT-4o and GPT-4o-mini yielded the largest number of accepted opportunities at similar high acceptance (86.15\% and 84.10\%), reflecting strong coverage when three proposals are made per experiment. LLaMA-70B shows slightly lower human acceptance (82.11\%) and, due to annotation budget, was assessed on 41 experiments. 



\begin{table}[ht!]
\centering
\small
\caption{Comparison of opportunity generation and quality across different language models, in terms of the total numbers of: (1) experiments reviewed, (2) experiments with opportunities, (3) opportunities generated, (4) high-quality opportunities. }
\label{tab:llm_opportunity_quality}
\begin{tabular}{>{\raggedright\arraybackslash\scriptsize}p{3.1cm}ccc}
\toprule
 & \textbf{GPT-4o} & \textbf{GPT-4o-mini} & \textbf{LLaMA-70B} \\
\midrule
\textbf{Experiments Reviewed}                & 65 & 65 & 41  \\
{\scriptsize \textbf{Experiments with Opportunities}}                & 65 & 65 & 41  \\
\textbf{Opportunities Generated}    & 195 & 195 & 123 \\
\textbf{High-Quality Opportunities} & 168 & 164 & 101  \\
\midrule 
\textbf{Acceptance Rate (\%)}                      & 86.15 & 84.10 & 82.11 \\
\bottomrule
\end{tabular}
\end{table}

\section{Discussion}

Experimentation Accelerator helps teams use scarce traffic wisely by ranking variants before testing, explaining wins through a simple attribute view, and turning gaps into concrete creative suggestions. 
In the next steps, we plan to extend the approach beyond text to image/video, audio, and page layouts, and to outcomes like conversions or revenue.  
We want to document potential limitations of the pipeline for broader use cases: (i) transfer learning is an important step to check when there is a domain shift from Upworthy to customer data. While we witness this success in our case, such transferability should always be carefully verified for new customers. (ii) Language coverage for the current approach is limited to English. Generalization to other languages should be carefully validated. (iii) The expressivity of embeddings is crucial for the pipeline, yet the performance varies highly across different text embedders based on our observations. In real production, it is also important to check a variety of embedding models to finalize the best one for a particular group of target customers.


\bibliographystyle{ACM-Reference-Format}
\bibliography{references}

\appendix

\section{Proof of Lemma \ref{lem:re-expression}}
\begin{proof}[Proof of Lemma \ref{lem:re-expression}]
    Based on Definition \ref{eqn:pred}, $\hat\beta''$ is expressed as:
    \begin{align}
        \hat\beta'' = V^{+\top} \hat\beta',
    \end{align}
    where $V^{+\top}$ is the Penrose-Moore pseudo inverse. Checking the least squares problem \eqref{eqn:beta-dprime-ls}, we know that the first order optimality condition gives that
    \begin{align}
        VV^\top \hat\beta'' = V\hat\beta'.
    \end{align}
    This further leads to 
    \begin{align}
        \hat\beta'' = (VV^\top)^{-1} V\hat\beta' = V^{+\top} \hat\beta'.
    \end{align}
\end{proof}


\section{LLM-as-a-judge for automated evaluation}
\label{sec:o3-eval}
We use an external LLM judge to validate both insight explanations and opportunity suggestions. The judge only sees the serialized treatment content and the candidate output, and returns a binary decision (acceptable=1, problematic=0) with a brief justification.

\textbf{Prompting and rubric.} A compact prompt provides one worked example and a minimal rubric. For insights, the judge checks that the explanation correctly accounts for performance differences and quotes phrases from the treatments. For opportunities, it checks that the suggestion identifies a truly missing, high-value attribute, explains why it would help, and includes one to two short, illustrative examples.

\textbf{Model and parsing.} Judging uses a reasoning model ``o3''. Responses follow a simple per-item schema, we extract the score and justification with a rule-based parser.

\paragraph{LLM evaluation (GPT-o3 judge).}
To complement human review at scale, we use GPT-o3 as an LLM judge with the same rubric. Table~\ref{tab:llm_insight_eval_grouped} summarizes means and standard deviations over the GPT-o3-judging quality for multiple rounds of insights generation across all experiments. The judge is stricter than humans (lower acceptance across the board) but preserves the relative ordering: GPT-4o and LLaMA-70B attain the highest acceptance rates, with GPT-4o yielding markedly more accepted insights in total due to higher generation volume. Smaller models (GPT-4o-mini, LLaMA-8B) generate many candidates, but few survive the stricter filter.

\begin{table*}[ht!]
\centering
\caption{Insight generation evaluation with GPT-o3 as a judge.}
\label{tab:llm_insight_eval_grouped}
\begin{tabular}{cccccc}
\toprule
\multirow{2}{*}{\textbf{Language Model}} &
\multicolumn{2}{c}{\textbf{Number of Experiments}} &
\multicolumn{2}{c}{\textbf{Number of Insights}} &
\textbf{Acceptance Rate (\%)} \\
\cmidrule(lr){2-3}\cmidrule(lr){4-5}
& \textbf{Reviewed} & \textbf{with Insight} & \textbf{Generated} & \textbf{With Good Quality} & \\
\midrule
GPT-4o        & 65 & $45.250 \pm 0.986$ & $49.250 \pm 2.511$ & $26.250 \pm 2.421$ & $53.87 \pm 6.70$ \\
GPT-4o-mini   & 65 & $61.000 \pm 0.471$ & $145.250 \pm 3.140$ & $24.250 \pm 1.590$ & $16.87 \pm 1.35$ \\
LLaMA-70B     & 65 & $23.000 \pm 0.000$ & $27.750 \pm 0.553$ & $16.500 \pm 1.106$ & $59.53 \pm 4.34$ \\
LLaMA-8B      & 65 & $53.750 \pm 0.289$ & $93.000 \pm 2.550$ & $18.000 \pm 2.550$ & $19.75 \pm 3.27$ \\
\bottomrule
\end{tabular}
\end{table*}

\paragraph{LLM evaluation (GPT-o3 judge) for opportunity generation.}
To scale evaluation, we also use GPT-o3 as an automatic judge with the same rubric. Results in Table~\ref{tab:llm_opportunity_eval_grouped} report means and standard deviations over the GPT-o3-judging quality for multiple rounds of opportunity generation across all experiments.. Acceptance rates are uniformly high, and the ordering broadly aligns with human review: GPT-4o and LLaMA-70B are competitive at the top, while GPT-4o-mini and LLaMA-8B perform slightly worse. Note that LLaMA-70B and LLaMA-8B generated 130 opportunities (two per experiment on average) in this configuration, whereas GPT-4o variants produced the full 195 (three per experiment), affecting total accepted count even when acceptance rates are close.

\begin{table*}[ht!]
\centering
\caption{Opportunity generation evaluation with GPT-o3 as a judge.}
\label{tab:llm_opportunity_eval_grouped}
\begin{tabular}{cccccc}
\toprule
\multirow{2}{*}{\textbf{Language Model}} &
\multirow{2}{*}{\textbf{\# Experiments}} &
\multicolumn{2}{c}{\textbf{Number of Opportunities}} &
\multirow{2}{*}{\textbf{Acceptance Rate (\%)}} \\
\cmidrule(lr){3-4}
& \textbf{Reviewed} & \textbf{Generated} & \textbf{With Good Quality} & \\
\midrule
GPT-4o        & 65 & $195.000 \pm 0.000$ & $158.250 \pm 1.658$ & $81.15 \pm 0.85$ \\
GPT-4o-mini   & 65 & $195.000 \pm 0.000$ & $147.250 \pm 1.443$ & $75.51 \pm 0.74$ \\
LLaMA-70B     & 65 & $130.000 \pm 0.000$ & $104.250 \pm 3.032$ & $80.19 \pm 2.33$ \\
LLaMA-8B      & 65 & $130.000 \pm 0.000$ & $101.500 \pm 1.202$ & $78.08 \pm 0.92$ \\
\bottomrule
\end{tabular}

\end{table*}

\end{document}